\documentclass[letterpaper]{article} 
\usepackage[preprint]{aaai2027}
\usepackage[hyphens]{url}  
\usepackage{graphicx} 
\usepackage{natbib}  
\usepackage{caption} 
\usepackage{algorithm}
\usepackage{algorithmic}
\usepackage{booktabs}
\usepackage{amsmath,amssymb,amsthm}
\usepackage{acronym}
\acrodef{GNN}{graph neural network}
\acrodef{MPR}{multipoint relay}
\acrodef{OLSRv2}{Optimized Link State Routing Protocol version~2}
\acrodef{ILP}{integer linear program}
\acrodef{ETX}{expected transmission count}
\acrodef{MLP}{multilayer perceptron}
\acrodef{GATv2}{graph attention network v2}
\acrodef{OONF}{OLSR.org Network Framework}
\acrodef{PyG}{PyTorch Geometric}
\acrodef{NATO}{North Atlantic Treaty Organization}

\newtheorem{theorem}{Theorem}
\newtheorem{proposition}{Proposition}

\newtheorem{remark}{Remark}

\usepackage{xcolor}

\title{Learning to Cover Locally: Graph Neural Combinatorial Optimization\\
under a Hard Information Horizon}

\author{
    Johannes F. Loevenich\textsuperscript{\rm 1,\rm 2}\corresponding,
    Thies M\"ohlenhof\textsuperscript{\rm 1},
    Laurin Holz\textsuperscript{\rm 1},
    Maxime Schwarzer\textsuperscript{\rm 1},\\
    Tobias H\"urten\textsuperscript{\rm 1},
    Roberto Rigolin F. Lopes\textsuperscript{\rm 1}
}
\affiliations{
    \textsuperscript{\rm 1}cortAIx Labs, Thales Deutschland, Ditzingen, Germany\\
    \textsuperscript{\rm 2}Department of Mathematics/Computer Science, University of Osnabr\"uck, Osnabr\"uck, Germany\\
    johannes.loevenich@thalesgroup.com
}

\begin{document}
\maketitle

\begin{abstract}
Neural combinatorial optimization typically assumes a centralized solver that reads the whole instance. We study the opposite: combinatorial optimization under a hard information horizon, where every node commits to its share of a global solution seeing only its $k$-hop neighborhood, and those commitments must compose into a globally feasible solution. We formalize this as local set cover and instantiate it on weighted \ac{MPR} selection, the NP-hard 2-hop covering problem of the \ac{OLSRv2} routing protocol (RFC~7181), whose horizon is imposed by the protocol, not chosen by the modeler. We prove two results. Any deterministic selector whose horizon is one hop short must either fail coverage or land a factor $\Delta$ from optimal, and an $L$-layer \ac{GNN} read out at the deciding node is exactly an $L$-hop selector, so capacity cannot buy back radius. Conversely, at the horizon a \ac{GNN} of depth $O(\Delta)$ reproduces the RFC~7181 covering greedy, and at width $O(c_{\max}\Delta)$ its metric-aware weighted analogue, inheriting the $(1+\ln\Delta_2)$-approximation in both cases. Empirically, a 3-layer \ac{GATv2} with a coverage-completing decoder, behavior-cloned from the CP-SAT optimum, reaches $\text{cost}/\text{opt}=1.030\pm0.001$ against greedy's $1.138$, closing $79.1\%$ of the gap at $100\%$ coverage. Restricting the same learner to one hop, on identical instances with the same decoder and demonstrations, collapses it to $1.344$, far worse than greedy. Two transfer checks target real-world networks. OLSRv2's unmodified selection code matches our cardinality greedy on $200/200$ unit-cost instances, and on $40{,}308$ instances of real battalion mobility the frozen model closes $48\%$ of the gap at full coverage. The information horizon, not the model capacity, is the most significant variable.
\end{abstract}
\acresetall

\section{Introduction}
Most neural combinatorial optimization assumes a solver that sees the whole graph: a centralized model reads every node, edge, and cost, then emits one global decision \citep{khalil2017learning,chen2024learn}. We suppose the target is a set cover, but each node commits to part of it seeing only its $k$-hop neighborhood, and those commitments must compose into a valid global cover. We call this combinatorial optimization under a \emph{hard information horizon}, and take the horizon to be the object of study.

In a link-state routing protocol, each node selects a subset of its one-hop neighbors as relays so that every two-hop neighbor is covered, knowing nothing beyond two hops \citep{clausen2014olsrv2,maccari2018where}: the horizon is $k=2$. Choosing a minimum such set is NP-hard, and the choice is irrevocably local. The protocol, as a real network application, motivates the problem and a cost-efficient solution.

The horizon is sharp on both sides. Any deterministic policy restricted to a one-hop view is, on some instance, either infeasible or a factor $\Delta$ from optimal, for $\Delta$ being the maximum degree. Exactly at two hops, a bounded-depth \ac{GNN} reproduces the RFC~7181 greedy rule and inherits its $(1+\ln\Delta_2)$-approximation \citep{chvatal1979greedy,sato2019approximation}. The two-hop ball is what the protocol supplies, so the radius stays capped however many layers the network has. This means network depth buys computation, not radius. This ceiling represents the main limitation. For the cardinality objective, the greedy algorithm is already near-optimal. Therefore, the true potential for improvement lies in the weighted objective, where relays incur link costs (\ac{ETX}, energy) and even a reasonably cost-effective greedy approach leaves a substantial performance gap.

We make four key contributions. First, we formalize combinatorial optimization under a hard information horizon as local set cover on weighted relay selection \citep{rolnick2024application}. Second, we tightly bound the task via an \(\Omega(\Delta)\) floor below the horizon (Proposition~\ref{prop:impossible}) and constructive guarantees that a bounded-depth network reaches the greedy ceiling at the horizon for both objectives (Theorems~\ref{thm:constructive},~\ref{thm:wconstructive}). Third, we train a \ac{GATv2} selector \citep{brody2022how} using an \ac{ILP}-cloned, feasible-by-construction decoder, outperforming fair greedy where a one-hop restricted baseline fails. Fourth, the selector seamlessly transfers out of distribution and to real-world mobility at full coverage.

\section{Local Set Cover under a Hard Information Horizon}
Combinatorial optimization usually hands the solver the whole instance. We study the opposite, in which the instance is a graph and the solver is the graph itself: every node commits to part of a global solution while seeing only its $k$-hop neighborhood, and the local commitments must compose into one globally valid solution. We fix the horizon at $k=2$ and call the resulting task \emph{local set cover} (Figure~\ref{fig:overview}).

Fix one node $v$. Its $2$-hop ball defines a finite instance $(N_1, N_2, E, c)$. The candidate set $N_1$ collects the $1$-hop neighbors of $v$; its size is at most the maximum degree $\Delta$ (up to $34$ in our test instances). The terminal set $N_2$ collects the strict $2$-hop neighbors (up to $54$). We write $\Delta_2$ for the largest number of terminals covered by any single candidate (up to $24$ here), the quantity that governs the greedy $(1+\ln\Delta_2)$ guarantee rather than the terminal count $|N_2|$. A bipartite coverage relation $E\subseteq N_1\times N_2$ records which candidate reaches which terminal, and a link cost $c:N_1\to\mathbb{Z}_{>0}$ weights each candidate, where $c(u)$ is the integer metric of the link $\{v,u\}$: a selected relay is charged its own center-facing link cost, correlated with that link's geometric length. Node $v$ selects
\[
M^\star \in \arg\min_{M\subseteq N_1}\ \sum_{u\in M} c(u)
\;\text{s.t.}\; \bigcup_{u\in M}\{t:(u,t)\in E\} = N_2,
\]
the minimum-weight subset of candidates that covers every terminal. This is a weighted set cover problem on the local ball, and the horizon $k=2$ is a hard input constraint: information beyond the ball never enters the instance. Each node solves its own instance from its own view. The decisions are independent, since no center's selection constrains another's, and global feasibility is exactly the conjunction of the per-node coverage constraints, which is what makes an exact per-instance optimum available to supervise every center. The information horizon constrains the localized topological context of each node's instance rather than the coordination between centers; consequently, the union of the locally selected sets constitutes the global forwarding overlay.

The horizon places the task within the \textsc{local} model of distributed computation, where a node computes as a function of its $k$-hop view, and with the known correspondence between that model and graph neural networks \citep{sato2019approximation}. Each instance is a $2$-hop ball because the protocol supplies exactly that and nothing further: a node learns its $2$-hop neighborhood from neighbor-discovery exchanges and cannot acquire a third hop, so the extraction models the information a node has rather than a preprocessing convenience. Two consequences follow. The receptive radius is capped at $2$ hops however many message-passing layers the network stacks: depth buys computation over the fixed ball, not reach beyond it. And the horizon remains a quantity we impose on the \emph{learner}, not merely on the data: restricting the encoder's access to the ball's $2$-hop structure, with the instances, the decoder and the supervision all held fixed, lets us price the horizon empirically rather than only assert it. The weighted objective inherits the hardness of set cover, and the greedy rule \citep{chvatal1979greedy} supplies the reference $(1+\ln\Delta_2)$ guarantee against which local decisions are measured. Horizon and depth therefore decouple, exactly as Theorem~\ref{thm:constructive} requires, and together with Proposition~\ref{prop:impossible} this fixes a lower and an upper bound: below the horizon any deterministic policy is either infeasible or a factor $\Delta$ from optimal, while at the horizon a network of depth $O(\Delta)$ reproduces the greedy, its view never leaving the $2$-hop ball, and inherits the guarantee, for the cardinality objective (Theorem~\ref{thm:constructive}) and, at width $O(c_{\max}\Delta)$, for the weighted one (Theorem~\ref{thm:wconstructive}).

We instantiate local set cover on \ac{MPR} selection in the \ac{OLSRv2} \citep{clausen2014olsrv2}. Each router picks a subset of its symmetric one-hop neighbors ($N_1$) that together reach all of its two-hop neighbors ($N_2$); the chosen relays flood control traffic, and their union forms the routing backbone. Cardinality \ac{MPR} selection is the extensively studied covering problem \citep{maccari2018where}; assigning each relay a link cost $c$, a metric such as \ac{ETX} or energy, yields the weighted instance above, where each node's objective is the summed link cost of its own relay set rather than that set's size.

\begin{figure*}[t]
  \centering
  \includegraphics[width=\textwidth]{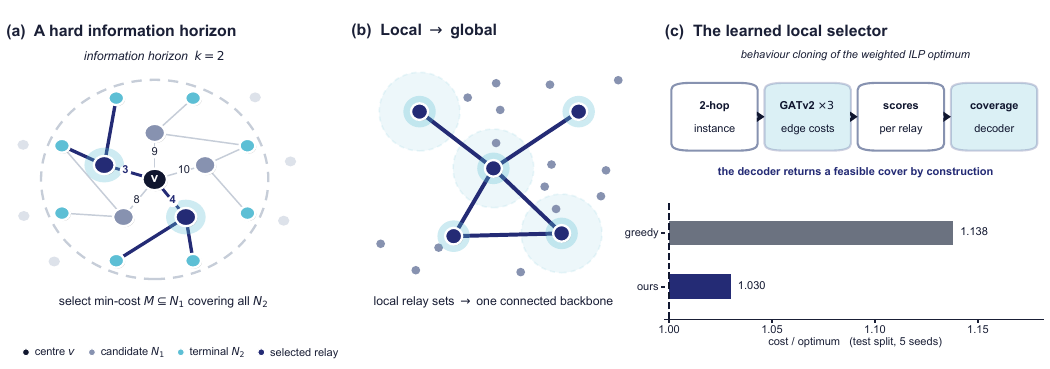}
  \caption{Combinatorial optimization under a hard information horizon.
  \textbf{(a)}~At centre $v$, candidates $N_1$ sit one hop out, each weighted by
  its centre-facing link cost, and terminals $N_2$ two
  hops out must all be covered; nothing beyond the dashed horizon exists for
  $v$. The costs make the objective weighted. \textbf{(b)}~Every node runs the
  same local selection and the relay sets union into one
  connected backbone; three local balls are drawn in full.
  \textbf{(c)}~An edge-conditioned GATv2 scores each candidate and a
  coverage-completing decoder returns a cover feasible by construction, so the
  network never learns feasibility. Below, cost/optimum against the fair greedy
  on the test split.}
  \label{fig:overview}
\end{figure*}

\section{What a $k$-Hop GNN Can Decide about a Global Cover}
\label{sec:theory}
Due to space constraints, full mathematical proofs and detailed network 
constructions for all propositions and theorems in this section are 
deferred to the supplementary material. Let $\Delta$ be the maximum degree and, for a center $v$, let $\Delta_2=\max_{x\in N_1}|\mathrm{cov}(x)|$ be the largest number of terminals a single candidate covers. Following the correspondence between message-passing \acp{GNN} and the distributed \textsc{Local} model~\citep{sato2019approximation}, a \ac{GNN} with $T$ layers computes, at each node, a function of its attributed $T$-hop ball.

\begin{proposition}[Horizon dichotomy]
\label{prop:impossible}
Call a selector \emph{one-hop-local} if its output at $v$ is a function of the
attributed ball $B_1(v)$. For every $\Delta\ge2$ there is a pair of instances of
maximum degree $\Delta$ on which any deterministic one-hop-local selector is
either infeasible, leaving a two-hop terminal uncovered, or feasible with
cardinality ratio at least $\Delta$. The construction uses unit costs, so the
dichotomy holds for the weighted objective too. Conversely, for the
cardinality objective selecting all of $N_1(v)$ is always feasible and, whenever
$N_2(v)\neq\emptyset$, has ratio at most $\Delta$; the worst-case ratio of
always-feasible one-hop selection over such instances is therefore exactly
$\Delta$.
\end{proposition}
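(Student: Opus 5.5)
We will use an indistinguishability argument: two instances whose attributed one-hop balls at the center $v$ are identical, but which demand different relay sets. Fix $\Delta\ge2$ and build a star at $v$ with neighbors $u_1,\dots,u_\Delta$, all with unit costs and identical attributes, so $v$ has degree $\Delta$. In instance $G_A$, attach one extra node $t$ adjacent only to $u_1$. In instance $G_B$, attach one extra node $t$ adjacent to all of $u_1,\dots,u_\Delta$. In both, $N_2(v)=\{t\}$ and the maximum degree is $\Delta$: in $G_B$, $t$ has degree $\Delta$ and each $u_i$ has degree $2$. In $G_A$ the unique feasible relay set containing few nodes must include $u_1$, and $\mathrm{opt}=1$. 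In $G_B$ every singleton $\{u_i\}$ is optimal, again with $\mathrm{opt}=1$.

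The step that needs care, and the main obstacle, is to make the one-hop views coincide while distinguishing $u_1$ from $u_i$ in $G_A$. If $B_1(v)$ carries node identifiers or port numbers, then in $G_A$ the selector can see which neighbor is labelled $u_1$. The fix is to randomize the placement of $t$ across a family of instances rather than a single pair. Let $G_A^{(j)}$ attach $t$ only to $u_j$. All $G_A^{(j)}$ and $G_B$ share the same attributed ball $B_1(v)$, since edges $u_j$--$t$ and the node $t$ lie outside $B_1(v)$. One should check that the attributed ball includes only $v$, its neighbors, and the edges incident to $v$ (plus possibly edges among neighbors, of which there are none here). Degrees of the $u_i$ would leak information, so we assume $B_1$ records only attributes on the edges $\{v,u\}$ and on the nodes themselves, as in the instance definition. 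Alternatively, we pad each $u_i$ in $G_A^{(j)}$ with dummy pendant neighbors so that all $u_i$ have equal degree. These pendants would become new terminals, so we prefer the convention above and will state it explicitly.

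A deterministic one-hop-local selector therefore outputs the same set $S\subseteq N_1$ on all these instances. There are two cases. If $S\neq N_1$, pick $u_j\notin S$; then $S$ leaves $t$ uncovered in $G_A^{(j)}$, so it is infeasible. If $S=N_1$, then $S$ is feasible on every instance, but in $G_B$ it costs $\Delta$ against $\mathrm{opt}=1$, giving ratio $\Delta$. This is the dichotomy. (The pair in the statement can be taken as $G_A^{(j)}$ for the omitted $j$ together with $G_B$.) Since all costs are $1$, weighted and cardinality costs coincide, so the dichotomy also holds for the weighted objective.

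For the converse, selecting all of $N_1(v)$ is feasible by definition of $N_2(v)$: every strict two-hop neighbor is adjacent to some one-hop neighbor. Its size is $|N_1(v)|\le\Delta$. If $N_2(v)\neq\emptyset$, any feasible cover has at least one element, so $\mathrm{opt}\ge1$ and the ratio is at most $\Delta$. Combined with the lower bound on $G_B$, an always-feasible one-hop selector must output $N_1$ on the shared view, so its worst-case ratio is exactly $\Delta$.
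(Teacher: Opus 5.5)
Your indistinguishability strategy, the two-case analysis, and the tightness argument are the right ingredients, but your construction does not prove the statement as written. The proposition asserts one pair of instances on which \emph{every} deterministic one-hop-local selector fails. Your family $\{G_A^{(j)}\}_{j}\cup\{G_B\}$ only yields, for each selector, a pair $(G_A^{(j)},G_B)$ whose index $j$ is chosen after seeing $S$, so your parenthetical remark about which pair to take swaps the quantifiers. No fixed pair from your family can work: the selector that always returns $\{u_1\}$ is feasible and optimal on both $G_A^{(1)}$ and $G_B$.

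Your argument also depends on a convention that $B_1(v)$ hides candidate degrees. In $G_A^{(j)}$ the node $u_j$ is the only candidate of degree $2$, while in $G_B$ every candidate has degree $2$. As soon as degree is an attribute, a one-hop selector can return the degree-$2$ candidate in $G_A^{(j)}$ and a singleton in $G_B$, and the lower bound disappears. Degree is one of the paper's node features, and in the port-numbered \textsc{local} setting a node knows its own degree.

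Both defects vanish with exactly the padding you considered and discarded. Attach to every $u_i$ its own pendant terminal $t_i$. The resulting instance $G$ does not depend on $j$. Since $\mathrm{cov}(u_i)=\{t_i\}$, each terminal has a unique coverer, so the only MPR set is $N_1$ and $\mathrm{opt}_G=\Delta$. The extra terminals are not a nuisance but the point: any $S\neq N_1$ omits some $u_j$ and strands $t_j$, whatever labels or ports the selector used, so infeasibility is uniform over selectors.

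The padding also makes every $u_i$ have degree exactly $2$ in both $G$ and $G_B$. Degrees and ports on $B_1(v)$ therefore coincide without any convention, and the maximum degree stays $\Delta$, attained at $v$ and also at $t$ in $G_B$. The pair $(G,G_B)$ is the paper's construction, and your case analysis and tightness argument then go through verbatim.
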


\noindent\emph{Proof idea.} Two gadgets share an identical one-hop view of $v$ but
demand different relay sets: in the first, each candidate covers a private
terminal, forcing \emph{all} of them; in the second, all candidates share one
terminal, so \emph{any single} one suffices (Figure~\ref{fig:gadget}). A
deterministic one-hop-local selector returns the same set on both. If it omits
any candidate it misses that candidate's private terminal in the first gadget; if
it returns all $\Delta$, it pays $\Delta$ against an optimum of one in the
second. Tightness follows since $|N_1(v)|\le\Delta$. \hfill$\square$

\begin{figure}[t]
  \centering
  \includegraphics[width=0.9\linewidth]{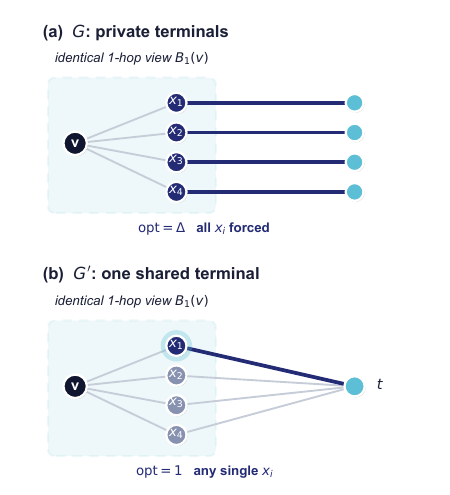}
  \caption{Both
  graphs give $v$ the identical one-hop view boxed in dashes.
  \textbf{(a)}~Each candidate covers a private terminal, forcing all $\Delta$
  into the relay set. \textbf{(b)}~All share one terminal, so any single
  candidate suffices. A selector restricted to one hop cannot distinguish them,
  so it is either infeasible in~(a) or a factor $\Delta$ from optimal in~(b).}
  \label{fig:gadget}
\end{figure}

\begin{remark}
An $L$-layer message-passing \ac{GNN} read out at $v$ computes a function of its
attributed $L$-hop ball \citep{sato2019approximation}, and is therefore an
$L$-hop-local selector. Proposition~\ref{prop:impossible} consequently applies to
every such one-layer network \emph{at any width}: capacity cannot substitute for
radius.
\end{remark}

\begin{theorem}[Constructive guarantee]
\label{thm:constructive}
For every $\Delta$ there is a message-passing \ac{GNN} with port numbering, depth
$O(\Delta)$, organized as at most $\Delta$ super-rounds of $O(1)$ layers of
which the $k^\ast\le\Delta$ greedy rounds are active, and $O(1)$ hidden
dimensions of $O(\log\Delta)$-bit precision, which at every center reproduces the
cardinality coverage greedy: marginal-gain selection under the fixed
deterministic key $(\text{gain},\text{willingness},\text{port})$ of our reference
implementation, the covering core of RFC~7181 multipoint-relay selection. It therefore attains a
$(1+\ln\Delta_2)$-approximation to the \emph{minimum-cardinality} MPR set, while
all message passing stays within the two-hop ball.
\end{theorem}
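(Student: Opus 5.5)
The plan is to simulate the greedy directly by message passing on the bipartite graph $N_1\cup N_2$ of the two-hop ball, rooted at $v$. Each node keeps an $O(1)$-dimensional state with $O(\log\Delta)$-bit entries. A candidate $x\in N_1$ stores a selected bit, its current gain $g(x)=|\mathrm{cov}(x)\cap U|$, its willingness, and its port number at $v$. A terminal $t\in N_2$ stores a covered bit, so that $U$ is the set of uncovered terminals. All gains are at most $\Delta_2\le\Delta$, so $O(\log\Delta)$ bits suffice. Ports at $v$ are distinct numbers in $[\Delta]$, which makes the lexicographic key $(\text{gain},\text{willingness},\text{port})$ injective on $N_1$ and the argmax unique. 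Every message travels along edges $v$--$N_1$ or $N_1$--$N_2$, so the receptive field never leaves $B_2(v)$, as the statement requires.

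Each super-round is a fixed block of $O(1)$ layers performing one greedy step.
\begin{itemize}
\item[(i)] Terminals broadcast their uncovered bit and candidates sum-aggregate these bits, giving $g(x)$ exactly. Sum aggregation is what makes counts expressible.
\item[(ii)] Candidates encode their key as a single integer $\kappa(x)=g(x)\cdot W P+w(x)\cdot P+p(x)$, with $O(\log\Delta)$ bits since willingness is drawn from a constant range, and send it to $v$. At $v$ a max-aggregation recovers $\kappa^\ast=\max_x\kappa(x)$.
\item[(iii)] $v$ broadcasts $\kappa^\ast$ back. Each unselected candidate with $g(x)>0$ compares it with its own key using a ReLU-implementable equality test on integers, e.g.\ $1-\min(1,|\kappa^\ast-\kappa(x)|)$, and sets its selected bit if they match.
\item[(iv)] The newly selected candidate notifies its terminals, which set their covered bit with $\max$.
\end{itemize}
If $\kappa^\ast$ has zero gain, no candidate changes, so the layers act as the identity. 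This is how we realize "at most $\Delta$ super-rounds of which $k^\ast$ are active": since each active round selects a distinct candidate, $k^\ast\le|N_1|\le\Delta$ rounds suffice and the total depth is $O(\Delta)$.

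Correctness comes by induction on rounds. After round $r$, the selected set equals the first $r$ picks of the reference greedy and the covered bits equal the union of their coverage. Each layer computes exactly the greedy's quantities, and the injective key makes the choice match tie-break for tie-break. Termination with $U=\emptyset$ gives feasibility, because $N_2$ is by definition covered by $N_1$. The $(1+\ln\Delta_2)$ bound is then Chvátal's analysis of the cardinality greedy \citep{chvatal1979greedy}, which we inherit verbatim because the output set is identical. We also note that the RFC~7181 pre-selection of candidates that are the sole coverers of some terminal can be added as one extra super-round: terminals count their covering candidates and flag them when the count is $1$.

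The main obstacle is step (ii)/(iii): realizing the exact lexicographic argmax with fixed-width, bounded-precision layers, and ensuring that exactly one candidate fires even under ties in gain. I would first commit to the integer packing above with $v$ aggregating by max, which is standard in the \textsc{local}/GNN correspondence \citep{sato2019approximation}. If only sum aggregation is allowed, the fallback is a tournament over port bits in $O(\log\Delta)$ sub-layers. That fallback still fits within $O(\Delta)$ depth, though it makes the super-round length $O(\log\Delta)$ rather than $O(1)$, so the max-aggregation version is preferred.
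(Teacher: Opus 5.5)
Your proposal is correct and follows the paper's proof essentially step for step: sum-aggregated marginal gains, a mixed-radix injective encoding of $(\text{gain},\text{willingness},\text{port})$, max-aggregation at $v$ followed by a broadcast-and-compare gated on positive gain, a max-update at the terminals, $\Delta$ stacked super-rounds, and induction plus Chv\'atal's bound. The one slip is in your unneeded sum-only fallback: $\Delta$ super-rounds of $O(\log\Delta)$ layers give depth $O(\Delta\log\Delta)$, not $O(\Delta)$, so that variant would not meet the stated depth bound.
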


\noindent\emph{Proof.} Fix a center $v$ and let $U$ be the terminals still
uncovered. Three facts compose into the claim; each is a lemma of the supplement,
where the layer weights are exhibited. \emph{(i)~Counting.} One sum-aggregation
over the coverage edges gives every candidate its marginal gain
$g(x)=|\mathrm{cov}(x)\cap U|$ exactly, in $O(\log\Delta)$ bits.
\emph{(ii)~Selection.} Since $v$ is adjacent to every candidate, encoding the key
$(\text{gain},\text{willingness},\text{port})$ injectively into one integer, then
a max-aggregation at $v$ and a broadcast comparison, marks the unique key
maximizer and nothing else (the CPNGNN device of
\citealp{sato2019approximation}). \emph{(iii)~Update.} A gated layer adds the
marked candidate to $S$ if and only if its gain is positive, and removes
$\mathrm{cov}$ of it from $U$. Each is $O(1)$ layers, so a greedy round is $O(1)$
layers, and $\Delta$ rounds suffice because every active round covers at least
one terminal. Induction over rounds gives $S=\mathcal{G}(v)$, the reference
greedy's output, so Chv\'atal's bound transfers unchanged
\citep{chvatal1979greedy}. Every aggregation stays inside the two-hop ball:
depth buys rounds, not radius.\hfill$\square$

\begin{theorem}[Weighted constructive guarantee]
\label{thm:wconstructive}
For integer link costs in a fixed range $\{1,\dots,c_{\max}\}$ ($c_{\max}\!=\!10$
here), there is a message-passing \ac{GNN} with port numbering, depth $O(\Delta)$, width
$O(c_{\max}\Delta)$, and $O(\log\Delta)$-bit precision that reproduces the covering
loop of the weighted cost-effectiveness greedy and attains a
$(1+\ln\Delta_2)$-approximation to the minimum-\emph{weight} MPR set, within the
two-hop ball. Writing $g(x)$ for the number of still-uncovered terminals a candidate $x$ covers in
a given round, the ratio $c(x)/g(x)$ is a fraction of bounded integers, so encoding
it by its rank in a fixed finite set reduces the weighted selection to the integer
construction above, at width $O(1)\!\to\!O(c_{\max}\Delta)$.
\end{theorem}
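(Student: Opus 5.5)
We reduce to the construction of Theorem~\ref{thm:constructive} and change only the selection key. Fix a center $v$ and keep the counting step~(i) and the update step~(iii) unchanged: one sum-aggregation over coverage edges gives each candidate its exact marginal gain $g(x)=|\mathrm{cov}(x)\cap U|\in\{0,\dots,\Delta_2\}$, and the gated layer adds the marked candidate to $S$ and removes its coverage from $U$. The only new ingredient is that the weighted greedy selects $\arg\min_{x:\,g(x)>0} c(x)/g(x)$ with a deterministic tie-break (willingness, port). This must again become a max-aggregation over a single injective integer key at $v$.

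\textbf{Ranking the ratio.} Let $\mathcal{R}=\{a/b : a\in\{1,\dots,c_{\max}\},\ b\in\{1,\dots,\Delta_2\}\}$. This is a fixed finite set with at most $c_{\max}\Delta$ elements, determined by $c_{\max}$ and $\Delta$ alone. Sort $\mathcal{R}$ decreasingly and let $\rho(a,b)\in\{1,\dots,|\mathcal{R}|\}$ be the rank of $a/b$, so that a higher rank means a smaller ratio. Equal fractions such as $2/4$ and $1/2$ receive equal rank, which is what exact ratio comparison requires. Candidates with $g(x)=0$ get rank $0$.

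\textbf{Computing the rank.} I would have each candidate compute $\rho(c(x),g(x))$ with an MLP-free piecewise-linear gadget. First, one-hot encode $g(x)$ into $\Delta_2$ channels using differences of ReLUs of the integer gain, for example $\mathbf{1}[g=b]=\mathrm{ReLU}(g-b+1)-2\,\mathrm{ReLU}(g-b)+\mathrm{ReLU}(g-b-1)$. Do the same for $c(x)$ into $c_{\max}$ channels. Then take the $c_{\max}\Delta_2$ products $\mathbf{1}[c=a]\,\mathbf{1}[g=b]$. These can be formed as $\mathrm{ReLU}(\mathbf{1}[c=a]+\mathbf{1}[g=b]-1)$ because both factors are $0/1$. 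Finally, a linear readout with weights $\rho(a,b)$ produces the rank. This is where the width $O(c_{\max}\Delta)$ comes from, and all values are integers bounded by $O(c_{\max}\Delta)$, so $O(\log\Delta)$ bits suffice for fixed $c_{\max}$.

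\textbf{Selecting and concluding.} The candidate then forms the injective key $\kappa(x)=(\rho\cdot W+w(x))\cdot P+\mathrm{port}(x)$ with $W,P$ exceeding the willingness and port ranges. We invoke step~(ii) verbatim: max-aggregation at $v$, broadcast, and a comparison marking the unique maximizer. The gate fires only if $g>0$, i.e.\ rank $>0$. Each round covers at least one terminal, so at most $|N_2|$ rounds run, and with the same $O(1)$-layer rounds this gives depth $O(\Delta)$ under the paper's accounting, as in Theorem~\ref{thm:constructive}. Induction over rounds shows $S$ equals the weighted greedy's output. Chv\'atal's weighted bound $H(\Delta_2)\le 1+\ln\Delta_2$ then transfers, and all aggregations use only edges $v$--$N_1$ and $N_1$--$N_2$, so the computation stays in the two-hop ball.

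The main obstacle is the rank step: making the ratio comparison exact rather than approximate, so that the network reproduces the greedy under ties and does not merely approximate it. The tabulated-rank gadget above is what I would try first. The subtle check is that the tie-break order in the key matches the reference implementation's order on equal ratios.
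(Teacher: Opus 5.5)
Your route is the paper's own. You keep the counting and gated-update steps, replace the key by a tabulated rank of $c(x)/g(x)$ that is $0$ on zero-gain candidates, fold it into one injective integer, and reuse the center max-aggregation and broadcast comparison. Your one-hot-then-AND gadget is a harmless two-hidden-layer alternative to the paper's single layer of bumps on the linearized index $c+c_{\max}g$, at the same width $O(c_{\max}\Delta)$.

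The round count, however, does not give the stated depth. The argument ``each round covers at least one terminal, so at most $|N_2|$ rounds run'' only bounds the active rounds by $|N_2|\le\Delta(\Delta-1)$. A fixed-depth network sized by it has depth $O(\Delta^2)$. The missing observation is that a selected candidate has its entire coverage removed from $U$, so its gain is $0$ in every later round. Your key (rank leading, and $0$ exactly when $g=0$) never lets a zero-gain candidate win while a positive-gain one exists. Hence every active round selects a new candidate, there are at most $|N_1|\le\Delta$ active rounds, and $\Delta$ super-rounds suffice. The surplus super-rounds are no-ops: injectivity of the key keeps the marked winner unique even when all gains vanish, and the positivity gate then adds nothing.

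Second, the tie-break you flag as the subtle check does not match. The weighted greedy being reproduced minimizes the lexicographic key $(c/g,\,-g,\,-\mathrm{will},\,-\mathrm{port})$, so equal-ratio ties go to the larger gain before willingness is consulted. Your $\kappa=(\rho W+\mathrm{will})P+\mathrm{port}$ has no gain field. In exactly the case you mention, a candidate with $(c,g)=(2,4)$ against one with $(1,2)$, your network may pick the gain-$2$ candidate where the reference picks the gain-$4$ one, and the runs then diverge. The network still realizes \emph{some} minimum-ratio greedy run, so Chv\'atal's bound survives, but it does not reproduce the specified greedy. The fix is to insert $g$ as a second field, e.g.\ $\kappa=\big((\rho G+g)W+\mathrm{will}\big)P+\mathrm{port}$ with $G>\Delta$; this plays the role of the paper's $\psi=C_0\rho+C_1g+C_2\,\mathrm{will}+\mathrm{port}$.

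Finally, index the ratio table and the one-hot of $g$ by the global bound $\Delta$ rather than the per-center $\Delta_2$. As written, $\mathcal{R}$ and hence the rank weights depend on the instance, whereas one weight assignment must serve every center.
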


\noindent\emph{Proof.} The weighted greedy selects $\arg\min_x c(x)/g(x)$. With
$c(x)\in\{1,\dots,c_{\max}\}$ and $g(x)\le\Delta$, that ratio takes values in a
fixed set $R$ of at most $c_{\max}\Delta$ rationals, so replacing the key by the
integer rank $\rho(x)=|R|-\mathrm{rk}(c(x)/g(x))$, with $\rho(x)=0$ when
$g(x)=0$, turns the argmin into an argmax over $O(\log(c_{\max}\Delta))$-bit
integers, which is exactly what step~(ii) above already selects. Steps~(i)
and~(iii) are untouched, so width $O(c_{\max}\Delta)$ suffices and the
$(1+\ln\Delta_2)$ bound follows from Chv\'atal's weighted analysis. The full
constructions, and the proof of Proposition~\ref{prop:impossible}, are in the
supplement.\hfill$\square$

\section{A Feasible-by-Construction Neural Selector}
A node at the horizon sees exactly its induced $2$-hop neighborhood. We encode that neighborhood as a graph and learn to read it. Each local \ac{MPR} instance becomes a \ac{PyG} graph whose vertices carry a role (center, candidate, or terminal), the node degree, the willingness parameter of \citet{clausen2014olsrv2}, a coverage-degree counting how many uncovered terminals a candidate reaches, a terminal-reach indicator, and two link-cost features. The charged cost is the center-facing link metric, which means a selected relay pays the cost of its own link to $v$. The costs of the links by which a candidate reaches its terminals are not charged; they enter only as auxiliary features. Both kinds of cost live on edges, so the encoder must read edge attributes, not node summaries alone, to consume them.

The encoder is a GATv2 attention network~\citep{brody2022how} of $3$ layers and roughly $240\text{k}$ parameters, chosen because GATv2 conditions each attention weight on the incident edge and therefore consumes the link costs directly rather than through a lossy node summary. Its output is one scalar score per candidate relay. A score is not a cover, and an uncalibrated score can miss a terminal entirely; we therefore never let the network decide feasibility. A coverage-completing decoder consumes the scores in a single pass: it computes the scores once, sorts the candidates once by descending score, force-selects any always-forward (\textsc{will\_always}) relays, and then, while an uncovered terminal remains, commits the first candidate in that fixed order that covers at least one still-uncovered terminal. Ties in score are broken deterministically by the decoder's fixed candidate enumeration. The loop halts only when every $2$-hop terminal is covered, so every decoded solution is a valid local cover \emph{by construction}. Feasibility is a property of the decoder, not a target the optimizer must hit; the network learns an ordering, and the decoder turns any ordering into a $100\%$-feasible cover.

This decoder is the constructive theorem made executable. The bounded-depth port-numbered network of that result reproduces the RFC~7181 coverage greedy of \citet{clausen2014olsrv2} and inherits its $(1+\ln \Delta_2)$-approximation \citep{chvatal1979greedy}; our decoder is the \emph{static-score variant} of that skeleton: where the theorem's construction recomputes an adaptive marginal-gain key each round, ours ranks candidates once by a fixed learned score and relies on the feasibility skip to preserve coverage-completion. In the classical cardinality regime the skeleton alone is near-optimal and there is little to learn, an empirical property of our distribution, where unweighted greedy sits within $0.5\%$ of optimal, not something the $(1+\ln\Delta_2)$ bound predicts. The learning earns its keep in the weighted regime, where relays carry link costs such as ETX or energy and Theorem~\ref{thm:wconstructive} certifies only the fixed cost-effectiveness rule. Chv\'atal's $(1+\ln\Delta_2)$ bound covers the weighted greedy no less than the cardinality one, so the baseline is not classically worse; what it leaves is an \emph{empirical} gap on realistic metric distributions, invisible to the worst-case bound, that the learned score closes. Two caveats keep this honest. First, replacing the fixed cost-effectiveness rule with a learned score forfeits the theorem's guarantee: the $(1+\ln\Delta_2)$ analysis is a property of the greedy \emph{rule}, so once the ordering is learned the decoded policy's only \emph{provable} property is feasibility, and its near-optimality on the weighted objective is an empirical finding, not a theorem. Second, the decoder externalizes the theorem's $k^\star$ sequential rounds: the coverage loop runs outside the network, so the encoder produces one ordering in a single forward pass rather than realizing the in-network round structure of Theorem~\ref{thm:constructive}, which is why a $3$-layer GATv2 suffices where that construction needs depth $O(\Delta)$. The depth the theorem spends on sequential composition, our design spends in the decoder's loop.

We train by behavior cloning. For each instance we solve the weighted MPR integer program to optimality with CP-SAT, the optimum subject to the protocol's always-forward constraint, so both the demonstration and the evaluation yardstick are the protocol-legal optimum, and treat the optimal relay set as the demonstration; the network is fit to rank the chosen relays above the rejected ones. Every candidate in our instances carries the default willingness, so that constraint is vacuous and the two optima coincide. We choose imitation over reinforcement learning deliberately. The instances are tiny, a single node's $2$-hop neighborhood, so an exact ILP expert is cheap to query at scale, which makes a dense supervised signal available for every training graph. Reinforcement learning would discard that signal, replace it with a sparse decoded-cost reward, and inherit the credit-assignment and stability problems of neural combinatorial optimization \citep{khalil2017learning,chen2024learn}; behavior cloning of an available optimum sidesteps all of it. Model selection uses the quantity we ultimately care about: we retain the checkpoint with the lowest decoded validation cost ratio, so validation measures the deployed decode, not a surrogate loss.

The architecture makes the division of labor explicit. The horizon fixes what can be seen; the theorem fixes that a greedy skeleton at that horizon is the right computation; the decoder supplies that skeleton and guarantees coverage; and the GATv2 encoder supplies the one thing the fixed rule lacks, a cost-aware ordering learned from optimal demonstrations.

\section{Experiments}
We evaluate the central hypothesis that within the 2-hop information horizon, a learned local policy can capture optimization gains that the most effective handcrafted heuristic fails to recover. Here, the local horizon serves as the primary subject of investigation rather than a system constraint. Under this paradigm, each node determines its localized contribution to the global cover based exclusively on its 2-hop neighborhood, a radius characterized by Theorems~\ref{thm:constructive} and~\ref{thm:wconstructive} as both a fundamental lower bound and a strict upper bound on performance.

\paragraph{Experimental Setup.} We evaluate on random-geometric unit-disk graphs, the standard model of a wireless neighborhood: $n$ nodes drop uniformly on the unit square, two link when within radius $r$, and each link carries an integer metric correlated with its length, mimicking ETX or energy costs. We optimize the \emph{weighted} \ac{MPR} objective, in which a relay set pays the sum of the selected relays' center-facing link metrics rather than its cardinality. This is where local reasoning pays off. For cardinality MPR, the greedy algorithm is near-optimal, and the ceiling is analytic; however, once relays are weighted by costs, even the fair cost-effectiveness greedy approach leaves a real gap. A CP-SAT integer program solves each instance to optimality and fixes the benchmark at $\text{cost}/\text{opt}=1.0$.

\paragraph{Primary Empirical Results.} Evaluated across 3,200 in-distribution test instances (\(n=80\), \(r=0.30\)) over five distinct random seeds, the proposed GNN achieves a competitive cost ratio of \(\text{cost}/\text{opt}=1.030\pm0.001\), compared to \(1.138\) for the greedy baseline. This performance translates to closing \(79.1\%\pm0.6\%\) of the greedy-to-optimal performance gap, with the GNN matching or outperforming the greedy heuristic on \(93.5\%\) of the instances. Notably, the variance across training seeds is negligible, remaining an order of magnitude below any reported effect size. The gap-closed metric is averaged exclusively over the subset of instances where the greedy baseline is strictly suboptimal (\(58.9\%\) of the test set), as the ratio remains undefined elsewhere. Verified via an independent oracle, every decoded solution successfully covers all 2-hop terminals; this continuous global feasibility is guaranteed by the architectural design of the decoder rather than being an emergent learned property. Furthermore, the mean performance ratio understates the model's efficacy: the GNN selector achieves exact optimality on \(85\%\) of all instances, in contrast with the greedy baseline's \(41\%\), and consistently matches or bounds the greedy performance across the entire empirical distribution (Figure~\ref{fig:analysis}a).

\begin{figure*}[t]
  \centering
  \includegraphics[width=\textwidth]{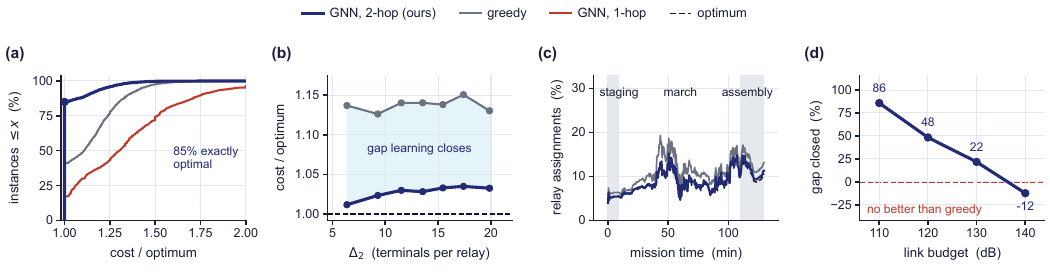}
  \caption{\textbf{(a)}~Per-instance
  cost/optimum on the $3200$ test instances.
  \textbf{(b)}~Cost/optimum against $\Delta_2$.
  \textbf{(c)}~Relay assignments summed over flood sources, as a percentage of
  full flooding's, over the $130$-minute ANGLOVA mission ($40{,}308$ instances
  sampled every $30$\,s); a per-source sum, not a transmitter count. \textbf{(d)}~Link-budget sweep. Panels (a,~b) use the released
  checkpoint; each point is a full-split aggregate.}
  \label{fig:analysis}
\end{figure*}

\paragraph{Impact of the Information Horizon.} The structural significance of the information horizon is verified by restricting the encoder to a 1-hop ball. We remove all coverage edges and 2-hop-dependent features, specifically zeroing out candidate degrees that reflect terminal coverage metrics, while keeping the feasibility-enforcing decoder intact. Following retraining, the center-facing link cost remains the unique candidate attribute. This 1-hop variant yields a cost ratio of \(\text{cost}/\text{opt}=1.344\pm0.000\), marking a substantial performance decline relative to the 2-hop model (\(1.030\)) and the greedy baseline (\(1.138\)) (Figure~\ref{fig:results}a). Deterministic convergence across all five seeds arises because the single scalar feature permits only two monotonic permutations, with every seed optimizing for descending link costs. This learned ordering fails to replicate the greedy heuristic; although cost correlates with coverage scope (Pearson \(r=0.56\) overall, \(0.61\) intra-instance), it acts only as an insufficient proxy for the latent graph topology. Thus, reducing the observation radius by one hop under identical architectural and supervisory conditions severely degrades policy efficacy. This ablated system circumvents the infeasibility branch of Proposition~\ref{prop:impossible} because the decoder continues to operate over the 2-hop ball to maintain \(100\%\) feasibility, isolating the empirical evaluation strictly to optimality bounds.

\paragraph{Component Ablation (E1).} We decompose the model's performance to isolate the sources of optimization gain (Table~\ref{tab:e1}). A random feasible policy establishes an upper bound cost ratio of \(1.641\). A structure-blind MLP scorer in the style of DeepMPR \citep{kaviani2023deepmpr} reduces this to \(1.218\), underperforming the greedy baseline (\(1.138\)) and demonstrating that data-driven optimization without topological awareness is insufficient. Incorporating message passing over the 2-hop subgraph using node-summary edge costs achieves \(1.057\), whereas edge-conditioned attention reduces the ratio further to \(1.029\). These results confirm that explicit edge-attributed graph modeling is critical. Analyzing the suboptimality gap as a function of \(\Delta _{2}\) reveals distinct scaling behaviors: the greedy heuristic's performance remains invariant at approximately \(1.14\), whereas the GNN cost ratio varies marginally between \(1.012\) and \(1.033\) as individual relays cover more terminals (Figure~\ref{fig:analysis}b). Every metric is normalized against the exact CP-SAT local optimum (\(1.000\)), defining a strict performance floor for any local selector. Finally, a full-graph baseline is omitted here as it evaluates a coupled global objective.

\begin{table}[t]
\centering
\begin{tabular}{lc}
\toprule
Method & $\text{cost}/\text{opt}$ \\
\midrule
Random feasible & $1.641$ \\
MLP, no graph (DeepMPR-style) & $1.218$ \\
Greedy reference & $1.138$ \\
GNN, node-summary costs & $1.057$ \\
GNN, full (ours) & $\mathbf{1.029}$ \\
Optimum (CP-SAT) & $1.000$ \\
\bottomrule
\end{tabular}
\caption{E1 ablation ladder on the test set ($n=3200$).}
\label{tab:e1}
\end{table}

\begin{figure}[t]
  \centering
  \includegraphics[width=\linewidth]{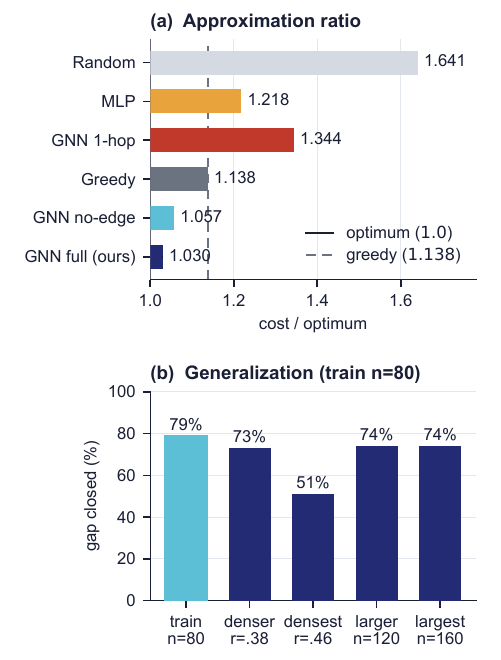}
  \caption{\textbf{(a)}~Cost/optimum ($1.0$ = optimal) for every method.
  \textbf{(b)}~Out-of-distribution transfer of the $n{=}80$ model.}
  \label{fig:results}
\end{figure}

\paragraph{Structural Feasibility Verification (E2).} Decoupling feasibility from optimization ensures a perfect coverage ratio of \(1.0\) for any score-generating heuristic wrapped by the decoder, including the random and MLP models, whereas raw neural outputs fail to satisfy coverage constraints independently. Because constraint satisfaction is an architectural invariant of the decoding mechanism rather than a learned objective, the neural encoder allocates its full capacity exclusively to cost optimization rather than maintaining global validity.

\paragraph{Generalization and Transferability (E3).} The architecture's generalization capacity is evaluated by applying the model trained at \(n=80, r=0.30\) to different topological domains. Scaling the node count to \(n=160\) maintains a high performance, with the gap-closed metric varying marginally from \(\sim79\%\) to \(\sim74\%\). Under dense network conditions (\(r=0.46\)), the metric decreases to \(51\%\), preserving a significant margin over the greedy heuristic and achieving a \(100\%\) feasibility rate across all evaluations (Figure~\ref{fig:results}b). This generalization capability stems from the local horizon design: the network captures localized 2-hop topological features that are statistically invariant to changes in global network scale.

\paragraph{Protocol Validation and Real-World Mobility (E4).} 
To validate our abstraction, we evaluate the 2-hop cardinality greedy implementation against a standalone harness driving the unmodified RFC~7181 selection code from \ac{OONF}~\citep{oonf}. Across 200 unit-cost instances, the selections match identically, confirming implementation fidelity (RFC~7181 defines no weighted variant). 

For real-world mobility, we utilize the public IST-124 ANGLOVA scenario~\citep{suri2018anglova}, a \ac{NATO} battalion vignette, extracting 40,308 non-trivial local instances across 260 timesteps sampled every 30\,s (120\,dB link budget). Evaluated on this dataset, the frozen synthetic-trained checkpoint achieves a cost ratio of 1.033 versus greedy's 1.080, closing 48.3\% of the performance gap at 100\% feasibility and outperforming greedy on 92.8\% of instances. Performance varies with node density, ranging from 80.9\% gap-closed during sparse staging to 7.2\% in dense assembly phases. Summed over every center as a flood source, the learned relay sets account for 8.7\% of full flooding's one-hop forwarding assignments, against 10.7\% for greedy and 8.3\% for the per-center optimum, and they track that optimum below greedy at every timestep (Figure~\ref{fig:analysis}c). This per-source sum is the quantity the per-node objective controls; it is not a count of transmitters, since a node that relays for several centers is counted once per center. A link-budget sweep demonstrates architectural robustness, yielding 85.8\% gap-closed at 110\,dB before dropping to $-12.4\%$ at the 140\,dB noise threshold where the greedy heuristic becomes competitive (Figure~\ref{fig:analysis}d).

\paragraph{Experimental Specification.} Local instances are mapped to \ac{PyG} graphs featuring node roles, vertex degrees, willingness, coverage degrees, terminal-reach indicators, and two edge-level link costs. The dataset comprises 12,000 training, 2,400 validation, and 3,200 test instances drawn from disjoint graph seeds. The encoder utilizes a 3-layer \ac{GATv2} with \(\approx \)240k parameters. Training employs behavior cloning derived from the weighted \ac{ILP} optimum via the CP-SAT solver, which establishes the reference baseline; model checkpoints are selected based on the lowest decoded validation cost ratio.In terms of computational latency, the median per-instance runtimes are 1.7 ms for CP-SAT, 0.06 ms for greedy, and 5.6 ms for our forward pass and decode loop. This work targets solution quality under localized information horizons rather than computational acceleration over small-scale exact solvers. Primary and ablation results denote the mean \(\pm \) standard deviation over five 12-epoch seeds, whereas the E1 ladder utilizes a single 12-epoch run, and E3, E4, and latency benchmarks rely on a 20-epoch checkpoint. The evaluation framework reproduces all aggregate metrics identically. The \ac{OONF} harness, ANGLOVA loader, and evaluation scripts are available in the public code repository.

\section{Related Work}
Learning to solve combinatorial optimization on graphs has, until now, typically meant learning on the \emph{whole} graph. S2V-DQN pairs a graph embedding with a $Q$-learner that grows a partial solution one node at a time while reading the global state at every step \citep{khalil2017learning}; a recent \ac{GNN} coupled with deep reinforcement learning learns minimum dominating sets over the entire topology \citep{chen2024learn}. Both assume a centralized solver with unobstructed access to all $n$ nodes. We remove that assumption. Each node commits to part of the cover while seeing only its $k=2$-hop neighborhood, and the local commitments must compose into a globally valid cover. The information horizon is the object of study, not an implementation detail: below two hops, any policy suffers an $\Omega(\Delta)$ gap, and at two hops a bounded-depth \ac{GNN} reproduces the RFC~7181 cardinality greedy.

The horizon also separates us from learned routing in mobile ad hoc networks. DeepMPR replaces the relay heuristic with a per-packet forwarding decision produced by a multilayer perceptron, discarding the multipoint-relay set entirely \citep{kaviani2023deepmpr}; graph-reinforcement approaches learn message-passing policies benchmarked against OLSR relaying \citep{galliera2024learning}. These methods learn \emph{when to forward}, not \emph{which relay set covers the two-hop neighborhood at minimum cost}. We retain the relay-set construct and formulate its selection as a combinatorial optimization problem. This enables a coverage-completing decoder to guarantee $100\%$ feasibility and allows our ablation to show that a graph-blind per-candidate scorer performs worse than the greedy method it is intended to replace.

Relay selection has a long optimization history. The RFC~7181 rule is greedy \citep{clausen2014olsrv2}, and the surrounding literature layers metaheuristics onto that skeleton \citep{maccari2018where}. Greedy is the right reference point: Chv\'atal's cost-effectiveness greedy attains a $(1+\ln \Delta_2)$-approximation for the \emph{weighted} set-cover objective, and hence for its unit-cost cardinality special case, so its worst-case guarantee does not distinguish the two regimes \citep{chvatal1979greedy}. Theorem~\ref{thm:wconstructive} shows that a bounded-depth \ac{GNN} attains that ceiling without ever leaving the $2$-hop ball. What separates the two objectives is empirical: on our distribution, unweighted greedy is within $0.5\%$ of optimal, whereas weighted greedy leaves a substantial gap that a metric-aware \ac{GNN} reduces, reaching $\text{cost}/\text{opt}=1.030$.

Two expressivity results ground the theory. The correspondence between message-passing \acp{GNN} and the \textsc{Local} model of distributed computing characterizes what a $k$-layer network can and cannot observe \citep{sato2019approximation}, and it supports our impossibility result: an $L$-layer network read out at the deciding node is an $L$-hop-local selector, so the horizon dichotomy applies directly to the one-layer case, regardless of width. Depth--width lower bounds \citep{loukas2020what} constrain what can be computed \emph{above} the horizon and are complementary here. This allows us to state the horizon as a two-sided theoretical bound rather than only an empirical observation.

Covering under a bounded horizon is itself classical in that model. Kuhn, Moscibroda, and Wattenhofer establish tight locality-versus-approximation trade-offs for covering and packing LPs \citep{kuhn2016local}, Suomela's survey maps the broader landscape of local algorithms \citep{suomela2013survey}, and the \ac{GNN} approximation ratios we invoke descend from this line of work \citep{sato2019approximation}. Our contribution is not to reopen that theory, but to carry its horizon constraint into \emph{neural} combinatorial optimization: a learned selector that operates strictly within the horizon, a \emph{weighted} objective on which classical local greedy leaves a meaningful empirical gap, and a cross-check of our cardinality greedy against the deployed protocol's own selection code on unit-cost instances (RFC~7181 specifies no weighted variant).

Finally, our optimality yardstick is an exact integer program, not another heuristic, so every reported gap is measured against ground truth rather than against a competing approximation.

\section{Conclusion}
This work examined decentralized combinatorial optimization under a hard information horizon, requiring independent local commitments within a \(2\)-hop neighborhood to form a globally valid cover. For weighted \ac{OLSRv2} \ac{MPR} selection, this task encompasses two distinct optimization tasks. In the cardinality setting, the horizon defines a strict performance boundary: below it, deterministic policies suffer from a \(\Delta \)-factor optimality gap or infeasibility (Proposition~\ref{prop:impossible}), whereas a bounded-depth \ac{GNN} exactly matches the RFC~7181 greedy performance and its \((1+\ln \Delta_2)\)-approximation (Theorem~\ref{thm:constructive}).
Although Theorem~\ref{thm:wconstructive} successfully carries this upper bound over to the weighted case, the greedy heuristic remains sub-optimal. Machine learning yields significant utility in this domain: a GATv2 architecture cloned from the weighted \ac{ILP} optimum utilizes a feasible-by-construction decoder to close \(79.1\%\) of the greedy performance gap at full coverage. Empirical evaluations on the ANGLOVA tactical mobility benchmark confirm that the model generalizes out-of-distribution while guarantees on feasibility hold continuously. Two key insights remain open for exploration: the global union of selected sets creates a coupled network structure that could benefit from an increased observation radius, and the remaining weighted optimization gap may be sensitive to specific architectural choices.

\bibliography{references}

\end{document}


\maketitle

This supplement gives the proofs deferred from the main paper: a complete proof of
the horizon dichotomy (Proposition~1 in the paper, restated here as
Proposition~\ref{prop:dich}) and the full lemma-by-lemma construction behind the
constructive guarantee (Theorem~1 in the paper, restated as
Theorem~\ref{thm:constr}), which the main paper establishes at proof-sketch level;
\S\ref{sec:weighted} then proves the weighted counterpart (Theorem~2 in the paper,
restated as Theorem~\ref{thm:wconstr}). Notation follows the main paper.
Fix a node $v$ of a connected graph $G$ of maximum degree $\Delta$. Its two-hop
ball supplies the local instance: $N_1=N_1(v)$ are the one-hop neighbours
(\emph{candidates}), $N_2=N_2(v)$ are the strict two-hop nodes (\emph{terminals},
i.e.\ nodes at distance exactly $2$ from $v$), and for a candidate $x$ we write
$\cov(x)=\{t\in N_2:\{x,t\}\in E\}$. An \emph{MPR set} is $M\subseteq N_1$ with
$\bigcup_{x\in M}\cov(x)=N_2$; $\opt(v)$ is the minimum cardinality of such a set.
Write $\Delta_2=\max_{x\in N_1}|\cov(x)|$. We call an instance \emph{non-trivial}
if $N_2\neq\emptyset$ (so $\opt(v)\ge 1$). In the \emph{weighted} setting
(\S\ref{sec:weighted}) each candidate additionally carries an integer cost
$c(x)\in\{1,\dots,c_{\max}\}$, and $\opt_w(v)$ denotes the minimum total cost of
a cover.

\section{An instance where the fair greedy loses}

\begin{figure}[h]
  \centering
  \includegraphics[width=\linewidth]{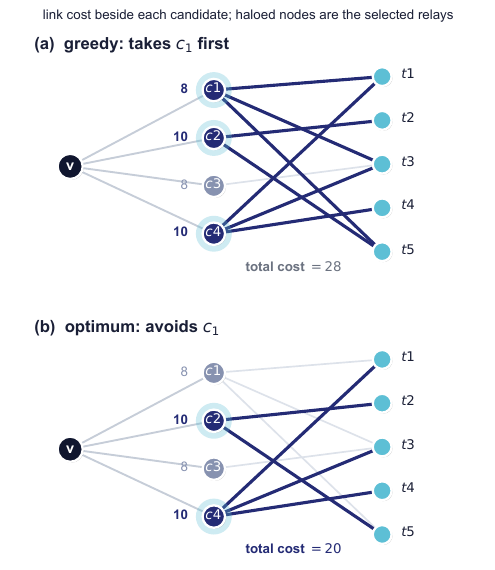}
  \caption{The fair cost-effectiveness
  greedy grabs $c_1$ first, it having the best cost-per-coverage, and finishes at
  cost $28$. The optimum avoids $c_1$ entirely: $\{c_2,c_4\}$ covers the same
  five terminals for cost $20$, a $29\%$ saving. The learned selector recovers
  exactly this kind of cost-aware choice, which no locally-greedy rule can see.
  The instance is node $16$ of seed $130$ drawn from the same generator as the
  test split at $n=45$, $r=0.30$ (the released test split uses $n=80$); it is an
  illustration, not a sampled statistic.}
  \label{fig:vignette}
\end{figure}

The greedy's first pick is locally rational and globally wrong: $c_1$ has the
best ratio of cost to fresh coverage, but taking it strands $t_2$ and $t_4$
behind two more relays. This is the weighted regime's characteristic failure,
and it is invisible to the cardinality objective, where $c_1$ is simply a good
pick.

\section{Proof of the horizon dichotomy}
\label{sec:prop}

\begin{definition}[one-hop-local selector]
A deterministic selector $A$ is \emph{one-hop-local} if its output $A(v)\subseteq
N_1(v)$ is a function only of the attributed one-hop ball $B_1(v)$: the subgraph
induced by $\{v\}\cup N_1(v)$ together with its node and edge features and, when
the model is port-numbered, the ports of the edges incident to $v$. In
particular $A(v)$ does not depend on $N_2(v)$ or on the coverage sets
$\cov(\cdot)$, which are determined only by $B_2(v)$.
\end{definition}

\begin{proposition}[horizon dichotomy]
\label{prop:dich}
For every integer $\Delta\ge 2$ there is a pair of non-trivial instances, each of
maximum degree $\Delta$, that present $v$ with an identical one-hop ball $B_1(v)$
and on which any deterministic one-hop-local selector is either
\begin{enumerate}
\item[(i)] \emph{infeasible}: it returns a set that fails to cover $N_2(v)$; or
\item[(ii)] \emph{feasible with cardinality ratio} $|A(v)|/\opt(v)\ge \Delta$.
\end{enumerate}
The construction uses unit costs, so the dichotomy (i)/(ii) holds verbatim for the
weighted objective. Conversely, for the \emph{cardinality} objective the selector
$A(v)=N_1(v)$ is feasible on every instance and, on every \emph{non-trivial}
instance, has ratio at most $\Delta$. Hence the worst-case ratio of any
always-feasible one-hop-local selector is exactly $\Delta$ for the cardinality
objective. (The upper bound does not transfer to the weighted
objective: a weighted optimum of $1$ forces some candidate to have cost $1$, so
selecting all of $N_1$ can cost $1+(\Delta-1)c_{\max}$ against it, and only the
lower-bound dichotomy carries over.)
\end{proposition}

\begin{proof}
Fix $d=\Delta\ge 2$ and build two graphs $G$ and $G'$ that share an identical
one-hop ball around $v$.

\paragraph{Shared core (identical $B_1(v)$).} Take the star with centre $v$ and
leaves $x_1,\dots,x_d$: vertices $\{v,x_1,\dots,x_d\}$ and edges $\{v,x_i\}$ for
$1\le i\le d$; the $x_i$ are pairwise non-adjacent. Thus in both graphs
$N_1(v)=\{x_1,\dots,x_d\}$ and the subgraph induced on $\{v\}\cup N_1(v)$ is the
same star with the same (say empty) candidate features.

\paragraph{Gadget $G$ (private terminals).} Add fresh vertices $t_1,\dots,t_d$ and
edges $\{x_i,t_i\}$. Then $\cov(x_i)=\{t_i\}$ and each $t_i$ has the unique coverer
$x_i$, so the only MPR set is $\{x_1,\dots,x_d\}$ and $\opt_G(v)=d$. Degrees:
$\deg(v)=d$, $\deg(x_i)=2$, $\deg(t_i)=1$, so the maximum degree of $G$ is $d$.

\paragraph{Gadget $G'$ (one shared terminal).} Add a single fresh vertex $t$ and
edges $\{x_i,t\}$ for all $i$. Then $\cov(x_i)=\{t\}$ for every $i$, so any one
candidate covers $N_2(v)=\{t\}$ and $\opt_{G'}(v)=1$. Degrees: $\deg(v)=d$,
$\deg(x_i)=2$, $\deg(t)=d$, so the maximum degree of $G'$ is $d$ (using $d\ge2$).

\paragraph{Indistinguishability.} By construction $B_1(v)$ is identical in $G$ and
$G'$: same centre, same candidate set, same star edges, no candidate--candidate
edges, and no terminal is within distance $1$ of $v$. Port numbering does not
separate them either: each $x_i$ has exactly two incident edges in both $G$ and
$G'$, so the ports on $B_1(v)$ can be assigned identically in the two graphs. A
one-hop-local selector therefore receives identical input and returns one and the
same set
$M:=A(v)\subseteq\{x_1,\dots,x_d\}$ in both graphs.

\paragraph{Case analysis.} Since $M\subseteq\{x_1,\dots,x_d\}$, either $|M|<d$ or
$|M|=d$.
\begin{itemize}
\item If $|M|<d$, some $x_j\notin M$. In $G$ the terminal $t_j$ is covered only by
$x_j$, so $t_j\notin\bigcup_{x\in M}\cov(x)$ and $M$ is infeasible on $G$: case (i).
\item If $|M|=d$, then $M=\{x_1,\dots,x_d\}$. In $G'$ we have $\opt_{G'}(v)=1$, so
$|M|/\opt_{G'}(v)=d=\Delta$: case (ii).
\end{itemize}
Every deterministic one-hop-local selector falls into exactly one case, proving
the dichotomy. As all costs are $1$ in the construction, the cardinality ratio
equals the weighted ratio there, so the dichotomy transfers to the weighted
objective.

\paragraph{Tightness.} Let $A(v)=N_1(v)$. Every strict two-hop node $t$ is, by
definition of distance $2$, adjacent to some one-hop neighbour of $v$, i.e.\ to
some candidate; hence $\bigcup_{x\in N_1}\cov(x)=N_2$ and $A$ is feasible on every
instance. Moreover $|N_1(v)|\le\deg(v)\le\Delta$ and, on a non-trivial instance,
$\opt(v)\ge1$, so $|A(v)|/\opt(v)\le\Delta$. Combined with case (ii), which forces
ratio $\Delta$ on some instance for \emph{any} always-feasible one-hop-local
selector (such a selector must return all of $\{x_1,\dots,x_d\}$ on $G$, hence on
$G'$), the worst-case ratio of always-feasible one-hop-local selection is exactly
$\Delta$ for the cardinality objective. That upper bound uses $|A(v)|\le\Delta$ and
is therefore specific to cardinality: under costs in $\{1,\dots,c_{\max}\}$ a
weighted optimum of $1$ forces some candidate to cost $1$, so selecting all of
$N_1$ can cost $1+(\Delta-1)c_{\max}$ against it. This is attained on $G'$ with
$c(x_1)=1$ and $c(x_i)=c_{\max}$ for $i\ge2$: then $\opt_w(v)=1$, while an
always-feasible one-hop-local selector must return all of $N_1$ (it does so on
$G$ carrying the same star costs, and the two remain indistinguishable, since
those costs live on the shared star edges of $B_1(v)$), paying exactly
$1+(\Delta-1)c_{\max}$.
\end{proof}

\noindent\emph{Remark.} A one-layer message-passing GNN whose decision at $v$ is
read out from $v$'s own representation computes a function of $B_1(v)$ and is thus
a one-hop-local selector; Proposition~\ref{prop:dich} applies to every such network
at any width. (Reading out at the candidates after $L$ layers gives each candidate
its own $L$-hop view, which is the regime Theorems~\ref{thm:constr}
and~\ref{thm:wconstr} construct in.) This is the
Remark of the main paper. The obstruction is optimality, not feasibility:
$A(v)=N_1(v)$ already certifies coverage from the one-hop ball.

\section{Proof of the constructive guarantee}
\label{sec:thm}

\subsection{Model of computation}

We use the standard message-passing model with port numbering, under which GNNs
coincide with the distributed \textsc{local} model \citep{sato2019approximation}.
The input is the attributed two-hop ball $B_2(v)$, exactly as in the main paper.
Each vertex $u$ carries a feature vector $h_u\in\mathbb{Z}^{c}$ with a constant
number $c=O(1)$ of integer channels of $O(\log\Delta)$ bits, including a node-type
channel $\tau(u)\in\{\textsc{center},\textsc{cand},\textsc{term}\}$ and an
uncovered-bit channel that is set to the value $b(t)$ at terminals and to $0$ at
$v$ and at candidates. \emph{Port numbering} equips every vertex with a fixed
ordering of its incident edges; in particular the centre $v$ assigns each
candidate $x$ a distinct $\port(x)\in\{1,\dots,\deg(v)\}$, the index of the edge
$\{v,x\}$ in $v$'s ordering.

All message functions below are gated on the endpoints' types, so information
flows only along the \emph{covering incidence edges}: the star edges $\{v,x\}$
($x\in N_1$) and the coverage edges $\{x,t\}$ ($x\in N_1$, $t\in\cov(x)$). Write
$H$ for this incidence subgraph. Any candidate--candidate or terminal--terminal
edges that the induced ball $B_2(v)$ may contain carry no covering information and
are ignored by the gating; thus $H\subseteq B_2(v)$ on the same vertex set
$\{v\}\cup N_1\cup N_2$, and no construction below ever references those edges.

One \emph{layer} updates every vertex simultaneously by
\[
  h_u \leftarrow \mathrm{UPD}\!\Big(h_u,\ \textstyle\bigoplus_{w\in N(u)}
     \mathrm{MSG}(h_u,h_w,p_{uw})\Big),
\]
where $\bigoplus\in\{\sum,\max\}$ is a permutation-invariant aggregator (both are
standard MPNN primitives), $p_{uw}$ is the \emph{sender} $w$'s port for the edge
$\{u,w\}$, so each message carries the sender's local port, the standard
port-numbering convention, and $\mathrm{UPD},\mathrm{MSG}$ are fixed-width ReLU
networks with rational weights.
``Width $O(1)$, precision $O(\log\Delta)$'' means $c=O(1)$ channels each holding an
integer of magnitude $\Delta^{O(1)}$. Because the input $B_2(v)$ is the entire
graph the network sees, no message ever references a vertex outside the two-hop
ball, \emph{regardless of the number of layers}: depth is rounds of computation
over the fixed two-hop instance, not receptive radius.

\paragraph{The greedy being simulated.} Let $\mathcal{G}$ be the coverage greedy
that maintains an uncovered set $U\subseteq N_2$ (initially $U=N_2$) and a chosen
set $S$ (initially $\emptyset$), and repeats: compute $g(x)=|\cov(x)\cap U|$ for
each $x\in N_1$; if $\max_x g(x)=0$ stop; otherwise add to $S$ the candidate
maximising the lexicographic key $\big(g(x),\will(x),\port(x)\big)$, where
$\will(x)\in\{0,\dots,W\}$ is the willingness feature, whose range $W$ is a fixed
constant of the protocol ($W=15$ in RFC~7181), and set
$U\leftarrow U\setminus\cov(x)$. We call $\mathcal G$ the \emph{cardinality
coverage greedy}: the covering core of RFC~7181 multipoint-relay selection
\citep{clausen2014olsrv2}, using the gain-first key $\big(\gain,\will,\port\big)$
of our reference implementation, in which willingness is a secondary criterion
below the marginal gain. Because $\mathcal G$ selects a maximum-marginal-gain
candidate in every round, it is exactly the classical set-cover greedy and
inherits the $(1+\ln\Delta_2)$ bound against the minimum-cardinality optimum; a
rule that instead prioritised willingness over gain would optimise a different,
lexicographic objective and would carry no cardinality guarantee. Two scope
remarks make the target precise.

\emph{Pre-passes.} The full RFC~7181 rule additionally forces every unique coverer,
forces every always-forward (\textsc{will\_always}) candidate, and masks out every
\textsc{will\_never} candidate before the coverage loop; each is a single $O(1)$ forcing layer prependable to the construction. The
unique-coverer pass is ratio-neutral: a unique coverer lies in every feasible
cover, so forcing it and recursing preserves the $(1+\ln\Delta_2)$ bound against
$\opt(v)$. The \textsc{will\_always} pass forces protocol-mandated relays that need
not lie in the unconstrained $\opt(v)$, so it preserves the bound only against the
\emph{willingness-constrained} optimum, the yardstick our experiments actually
use, since the reference integer program forces the same relays. We omit both so
that $\mathcal G$ and Theorem~\ref{thm:constr} are stated for the pure coverage
greedy against the unconstrained cardinality optimum.

\emph{Tie-break.} The third key is any fixed deterministic total order on $N_1$,
realised by the port numbering: $\port$ is an arbitrary injection
$N_1\to\{1,\dots,\deg(v)\}$. The approximation guarantee below is independent of
this choice; to make $\mathcal G$ coincide with the reference implementation's
$(\gain,\will,\text{smallest node id})$ rule one sets
$\port(x)=\deg(v)+1-\mathrm{rank}_{\uparrow\mathrm{id}}(x)$, so that maximal port
equals minimal id. Because $\port$ is injective on $N_1$, the key is a strict total
order and the argmax is unique in every round; $\mathcal{G}$ selects
$k^\ast=|S|\le|N_1|\le\Delta$ candidates.

\subsection{The three per-round lemmas}

We give the state carried between super-rounds: each terminal $t$ holds a bit
$b(t)=[\,t\in U\,]$; each candidate $x$ holds its (fixed) features $\will(x)$ and
$\port(x)$ and a selection flag $s(x)\in\{0,1\}$, initially $0$. A super-round
reads $\{b(t)\}$ and updates $\{s(x)\}$ and $\{b(t)\}$.

\begin{lemma}[marginal-coverage counting]
\label{lem:count}
One layer computes, at every candidate $x$, the exact marginal gain
$g(x)=\sum_{t\in\cov(x)}b(t)=|\cov(x)\cap U|$, an integer in $\{0,\dots,\Delta_2\}$
storable in $O(\log\Delta_2)$ bits.
\end{lemma}
\begin{proof}
In $H$ the neighbours of a candidate $x$ are $v$ and the terminals $\cov(x)$; a
terminal adjacent to $v$ would be a candidate, so no terminal neighbours $v$, and
by the type gating candidate--candidate edges are ignored. Take the sum aggregator
with the type-gated message $\mathrm{MSG}(h_x,h_w,\cdot)=b\text{-channel}(h_w)$ if
$\tau(w)=\textsc{term}$ and $0$ otherwise. Since the uncovered bit is stored on the
$b$-channel at terminals and is $0$ at $v$ and candidates, candidate $x$ receives
$\sum_{t\in\cov(x)}b(t)=|\cov(x)\cap U|$, an integer at most $|\cov(x)|\le\Delta_2$.
\end{proof}

\begin{lemma}[center-coordinated argmax]
\label{lem:argmax}
Assume each candidate holds $g(x)$ (Lemma~\ref{lem:count}), $\will(x)$ and
$\port(x)$. In $O(1)$ layers, using $O(1)$ channels of $O(\log\Delta)$-bit
precision, the network sets an indicator $I(x)\in\{0,1\}$ that equals $1$ for the
unique maximiser of $\big(g(x),\will(x),\port(x)\big)$ and $0$ for all other
candidates.
\end{lemma}
\begin{proof}
\emph{Injective, order-preserving key.} Put
\[
  \varphi(x)=C_1\,g(x)+C_2\,\will(x)+\port(x),
\]
with $C_2=\Delta+1$ and $C_1=C_2 W+\Delta+1$. These are functions of the global
constants $\Delta$ and $W$ only, never of the centre's own degree, so a
\emph{single} weight assignment serves every centre, as Theorem~\ref{thm:constr}
requires, since it asserts one network for all $v$.
This is a linear readout of the three integer channels (one layer). We claim
$\varphi(x)>\varphi(y)$ iff $(g,\will,\port)(x)>_{\mathrm{lex}}(g,\will,\port)(y)$.
Write $\delta_g=g(x)-g(y)$, $\delta_w=\will(x)-\will(y)$,
$\delta_p=\port(x)-\port(y)$, so
$\varphi(x)-\varphi(y)=C_1\delta_g+C_2\delta_w+\delta_p$, with
$|\delta_w|\le W$ and $1\le|\delta_p|\le\deg(v)-1\le\Delta-1$ (ports are distinct).
\begin{itemize}
\item If $\delta_g\ge1$: $\varphi(x)-\varphi(y)\ge C_1-C_2W-(\Delta-1)=2>0$.
\item If $\delta_g=0,\ \delta_w\ge1$: $\varphi(x)-\varphi(y)=C_2\delta_w+\delta_p
      \ge C_2-(\Delta-1)=2>0$.
\item If $\delta_g=\delta_w=0$: $\varphi(x)-\varphi(y)=\delta_p$, which has the
      sign of $\port(x)-\port(y)$.
\end{itemize}
By symmetry the converse inequalities hold, so $\varphi$ is strictly
order-preserving; since $\port$ is injective on $N_1$, $\varphi$ is injective, and
consecutive values differ by at least $1$. Its magnitude is
$\varphi(x)\le C_1\Delta_2+C_2W+\deg(v)=O(\Delta^2)$, i.e.\ $O(\log\Delta)$ bits.
(Candidate $x$ obtains $\port(x)$ in one setup layer: under the sender-port
convention, $v$'s message to $x$ carries $v$'s port for the edge $\{v,x\}$, which
is $\port(x)$ by definition; $x$ copies it into a channel once, as ports are
static.)

\emph{Center maximum.} The neighbours of $v$ in $H$ are exactly the candidates
(terminals are at distance $2$). With the $\max$ aggregator, one layer gives $v$
the value $\varphi_{\max}=\max_{x\in N_1}\varphi(x)$.

\emph{Recover the argmax.} $v$ broadcasts $\varphi_{\max}$ to its neighbours (one
layer: send $\varphi_{\max}$ along every incident edge). Each candidate $x$ then
computes
\[
  I(x)=\mathrm{ReLU}\big(\varphi(x)-\varphi_{\max}+1\big).
\]
Because $\varphi(x)\le\varphi_{\max}$ and the $\varphi$ are integers at least $1$
apart, $I(x)=1$ exactly when $\varphi(x)=\varphi_{\max}$ and $I(x)=0$ otherwise.
Injectivity makes the maximiser unique, so exactly one candidate has $I(x)=1$.
Every step is a linear map, a $\max$ aggregation, or a single ReLU unit, using a
constant number of $O(\log\Delta)$-bit channels.
\end{proof}

\begin{lemma}[gated coverage update]
\label{lem:update}
Assume each candidate holds $g(x)$ and a $\{0,1\}$-valued indicator $I$ with
exactly one candidate satisfying $I(x)=1$. Then in $O(1)$ layers the network
(a) adds the $I$-marked candidate to $S$ \emph{if and only if} its gain is
positive, and no other candidate, and (b) removes the newly covered terminals
from $U$. If $\max_x g(x)=0$ the round is
a no-op.
\end{lemma}
\begin{proof}
Let $I(x)$ be the given indicator (supplied by Lemma~\ref{lem:argmax} in the
cardinality construction and by Lemma~\ref{lem:wargmin} in the weighted one). Define the clipped
positivity bit $q(x)=\min(1,g(x))=\mathrm{ReLU}(g(x))-\mathrm{ReLU}(g(x)-1)$, which
is $1$ if $g(x)\ge1$ and $0$ if $g(x)=0$ (two ReLU units on the integer channel
$g$). The winner's selection mark is
\[
  m(x)=\mathrm{ReLU}\big(I(x)+q(x)-1\big)=I(x)\wedge q(x),
\]
the logical AND of two bits. Set $s(x)\leftarrow\max(s(x),m(x))$. Let $w$ be the
unique candidate with $I(w)=1$; every other candidate has $I=0$ and hence $m=0$.
If $g(w)\ge1$ then $q(w)=1$, so $m(w)=1$ and $w$ is added; if $g(w)=0$ then
$m(w)=0$ and nothing is added. In particular, when $\max_x g(x)=0$ we have
$q(\cdot)\equiv0$, so $m(\cdot)\equiv0$ and the round is a no-op. The argument
uses only the hypothesis that $I$ is $\{0,1\}$-valued with a unique $1$; it does
not assume $w$ maximises the gain.

For the update, each terminal $t$ aggregates $m(\cdot)$ over its candidate
neighbours: $r(t)=\max_{x:\,t\in\cov(x)}m(x)$ (one $\max$-aggregation layer). Then
$b(t)\leftarrow b(t)\cdot(1-r(t))=\mathrm{ReLU}(b(t)-r(t))$, which clears exactly
the terminals covered by the marked winner. Thus $U$ becomes
$U\setminus\cov(w)$ when a winner fires, and is unchanged otherwise.
\end{proof}

\subsection{Composition, correctness, and the approximation bound}

\begin{theorem}[constructive guarantee]
\label{thm:constr}
For every $\Delta$ there is a message-passing GNN with port numbering, depth
$O(\Delta)$, at most $\Delta$ super-rounds of $O(1)$ layers each, and $O(1)$
channels of $O(\log\Delta)$-bit precision, that at every centre $v$ outputs
$S=\{x:s(x)=1\}$ equal to the selection $\mathcal{G}(v)$ of the greedy of
\S\ref{sec:thm}. Consequently the network is coverage-correct and attains a
$(1+\ln\Delta_2)$-approximation to the minimum-cardinality MPR set, and all its
message passing stays within the two-hop ball $B_2(v)$.
\end{theorem}

\begin{proof}
One super-round is the composition of Lemma~\ref{lem:count} (one layer),
Lemma~\ref{lem:argmax} ($O(1)$ layers), and Lemma~\ref{lem:update} ($O(1)$
layers), so a super-round is $O(1)$ layers. A one-time setup pass (initialise
$b(t)=1$, $s(x)=0$, deliver $\port(x)$) is also $O(1)$ layers. Stack $\Delta$
super-rounds; total depth $O(\Delta)$, width $O(1)$, precision $O(\log\Delta)$.

\emph{Correctness by induction on rounds.} Invariant $P_r$: after $r$ super-rounds
the network state satisfies $S=S_r$ and $U=N_2\setminus\bigcup_{x\in S_r}\cov(x)$,
where $S_r$ is the set chosen by $\mathcal G$ after $r$ iterations, and
$b(t)=[t\in U]$. Base $r=0$: $S_0=\emptyset$, $U=N_2$, $b\equiv1$, established by
the setup pass. Step: assume $P_r$. By Lemma~\ref{lem:count} each candidate holds
$g(x)=|\cov(x)\cap U|$, the same gain $\mathcal G$ uses in iteration $r+1$. If
$\max_x g(x)=0$ then $\mathcal G$ has terminated ($S_{r+1}=S_r$, $U=U$) and
Lemma~\ref{lem:update} makes the round a no-op, so the network state also stays
$S_r,U$. Otherwise Lemma~\ref{lem:argmax} marks the unique lexicographic maximiser
$w$, which is exactly the candidate $\mathcal G$ selects in iteration $r+1$
(identical key, identical uniqueness). Its gain $g(w)=\max_x g(x)\ge1$ is
positive, since gain is the leading field of the key and we are in the branch
$\max_x g(x)\neq0$; Lemma~\ref{lem:update}'s positivity condition therefore
holds, and it sets
$S\leftarrow S_r\cup\{w\}=S_{r+1}$ and $U\leftarrow U\setminus\cov(w)$, restoring
$b(t)=[t\in U]$. Hence $P_{r+1}$ holds.

Since $\mathcal G$ terminates after $k^\ast\le\Delta$ iterations with
$U=\emptyset$, invariant $P_\Delta$ gives $S=S_{k^\ast}=\mathcal G(v)$: rounds
$k^\ast+1,\dots,\Delta$ satisfy $\max_x g(x)=0$ and are no-ops. Thus the network
reproduces $\mathcal G(v)$ exactly, and $\mathcal G$ is coverage-correct
($U=\emptyset$ at termination), so $S$ covers $N_2$.

\emph{Approximation.} $\mathcal G$ is the greedy for minimum-cardinality set cover
with ground set $N_2$ and sets $\{\cov(x)\}_{x\in N_1}$ of size at most
$\Delta_2$. The classical analysis of greedy set cover gives approximation ratio
$\mathcal{H}(\Delta_2)\le 1+\ln\Delta_2$, where $\mathcal{H}$ is the harmonic
number \citep{chvatal1979greedy}. The tie-break affects only which optimal-ratio run is
realised, not the bound. Since the network output equals $\mathcal G(v)$, it
inherits the $(1+\ln\Delta_2)$-approximation.

\emph{Locality.} Every message in every layer travels along an incidence edge of
$H\subseteq B_2(v)$ between $v$, a candidate, and a terminal, all of which lie in
the two-hop ball. No vertex outside $B_2(v)$ is ever referenced, at any depth; the
$O(\Delta)$ layers are rounds over the fixed two-hop instance. This is the
decoupling of depth from radius.
\end{proof}

\subsection{The weighted objective}
\label{sec:weighted}

Theorem~\ref{thm:constr} concerns the cardinality objective. The paper's
experiments optimise the \emph{weighted} objective, in which each candidate $x$
carries an integer link cost $c(x)\in\{1,\dots,c_{\max}\}$ for a fixed bound
$c_{\max}$ ($c_{\max}=10$ in our instances) and a cover $S\subseteq N_1$ is charged
$c(S)=\sum_{x\in S}c(x)$; write $\opt_w(v)$ for the minimum weight of a cover. Here
$c(x)$ is the metric of the star edge $\{v,x\}$; since each candidate has exactly
one such edge, it is carried without loss of generality as an
$O(\log c_{\max})$-bit node channel of $x$ in the attributed ball, which is
precisely how the implementation stores it.

The weighted cost-effectiveness greedy $\mathcal G_w$ is defined like $\mathcal G$
but, each round, if $\max_x g(x)>0$, adds the candidate minimising the
lexicographic key
\[
  \Big(\tfrac{c(x)}{g(x)},\ -g(x),\ -\will(x),\ -\port(x)\Big)
\]
over $\{x: g(x)\ge1\}$: smallest cost-effectiveness ratio, breaking ties toward
larger gain, then larger willingness, then the fixed deterministic order given by
the port numbering, which, setting
$\port(x)=\deg(v)+1-\mathrm{rank}_{\uparrow\mathrm{id}}(x)$ as in
\S\ref{sec:thm}, is the reference implementation's smallest-node-id rule. A
selected candidate has $g=0$ in every later round and so leaves
$\{x:g(x)\ge1\}$; the selections are therefore distinct and
$k^\ast\le|N_1|\le\Delta$, exactly as for $\mathcal G$. This is the covering
loop of the weighted cost-effectiveness baseline our experiments use, the
metric-aware analogue of the RFC~7181 rule (the RFC itself specifies no weighted
variant); exactly as for $\mathcal G$
(\S\ref{sec:thm}), we state $\mathcal G_w$ without the \textsc{will\_always}
forcing pre-pass and the \textsc{will\_never} mask that the reference
implementation additionally applies, and Theorem~\ref{thm:wconstr} is stated for
$\mathcal G_w$ against the unconstrained $\opt_w(v)$, whereas the reference integer
program both forces the always-relays and drops the \textsc{will\_never} candidates.
On the instances we evaluate, every candidate carries the default willingness, so
both differences are vacuous and the two optima coincide. By Chv\'atal's analysis of the weighted
greedy, $\mathcal G_w$ attains an $\mathcal H(\Delta_2)\le 1+\ln\Delta_2$
approximation to $\opt_w(v)$, the parameter being the largest set size
\citep{chvatal1979greedy}. The last three key fields order only candidates already
tied on $c(x)/g(x)$, so every run of $\mathcal G_w$ is a minimum-ratio greedy run and
the bound is tie-break independent, exactly as in the cardinality case.

The obstruction is that the primary key $c(x)/g(x)$ is a \emph{rational}, not the
bounded integer Lemma~\ref{lem:argmax} consumes. It is, however, a ratio of bounded
integers, so it ranges over a fixed finite set. Let
$R=\{\,c/g : c\in\{1,\dots,c_{\max}\},\ g\in\{1,\dots,\Delta\}\,\}\subset\mathbb Q$,
a set of size $|R|\le c_{\max}\Delta$ that depends only on $c_{\max}$ and the degree
bound $\Delta$, not on the instance, and let $\mathrm{rk}:R\to\{1,\dots,|R|\}$ be the
position of a value in the ascending order of $R$ (a fixed, precomputable table).
Note that equal ratios share a rank, which is the intended behaviour: $\mathcal G_w$
compares the rationals, so candidates with $c/g$ equal (say $1/2$ and $2/4$) are tied
on its primary key and must fall through to the gain tie-break.

\begin{lemma}[weighted center-coordinated argmin]
\label{lem:wargmin}
Assume costs are integers in $\{1,\dots,c_{\max}\}$ with $c_{\max}=\Delta^{O(1)}$,
and that each candidate holds $g(x)$ (Lemma~\ref{lem:count}), $c(x)$, $\will(x)$ and
$\port(x)$. In $O(1)$ layers, using $O(c_{\max}\Delta)$ channels of
$O(\log\Delta)$-bit precision, the network sets an indicator that is $1$ for the
candidate $\mathcal G_w$ selects and $0$ elsewhere, whenever $\max_x g(x)\ge1$.
Moreover $I$ is $\{0,1\}$-valued with exactly one candidate satisfying $I(x)=1$
\emph{unconditionally}, including in rounds where every gain is zero.
\end{lemma}
\begin{proof}
\emph{Ratio-rank encoding.} Lemma~\ref{lem:count} delivers $g(x)\in\{0,\dots,\Delta_2\}$
with $\Delta_2\le\Delta$, so the pair $(c(x),g(x))$ lies in the grid
$\{1,\dots,c_{\max}\}\times\{0,1,\dots,\Delta\}$. Define
\[
  \rho(x)=\begin{cases}
    |R|-\mathrm{rk}\big(c(x)/g(x)\big), & g(x)\ge1,\\[2pt]
    0, & g(x)=0,
  \end{cases}
\]
so $\rho(x)\in\{0,\dots,|R|-1\}$, the second case being exactly what the network
below emits, since no grid bump for a positive gain fires. On $g\ge1$, $\mathrm{rk}$ is strictly increasing in the ratio, so
$\rho$ is strictly \emph{decreasing} in $c(x)/g(x)$: a smaller ratio gives a larger
$\rho$. The map $(c,g)\mapsto\rho$ is a fixed function on a grid of
$c_{\max}(\Delta+1)$ cells and is realised exactly as follows. Linearise the grid by
the affine map $k=c+c_{\max}g$, absorbed into the hidden pre-activation; it is a
bijection onto $\{1,\dots,c_{\max}(\Delta+1)\}$ because $c\ge1$, so the cells of
gain $g$ occupy the block $\{c_{\max}g+1,\dots,c_{\max}(g+1)\}$ and no two cells
collide,
and put on the integer $k$ the bump
$\beta_{k_0}(k)=\mathrm{ReLU}(k-k_0+1)-2\,\mathrm{ReLU}(k-k_0)+\mathrm{ReLU}(k-k_0-1)$,
which is $1$ at $k=k_0$ and $0$ at every other integer. Summing
$\beta_{k_0}$ over the grid with coefficient $|R|-\mathrm{rk}(c/g)$ (coefficient $0$
on the $g=0$ column) gives $\rho$ exactly, using $3c_{\max}(\Delta+1)=O(c_{\max}\Delta)$
hidden units and one linear readout. On a $g=0$ input every positive-gain bump
evaluates to $0$ and the $g=0$ bump carries coefficient $0$, so the output is exactly
the integer $0$, as required. The output lies in $\{0,\dots,|R|-1\}$, i.e.\
$O(\log\Delta)$ bits for $c_{\max}=\Delta^{O(1)}$.

\emph{Reduction to Lemma~\ref{lem:argmax}.} All four fields
$\rho(x),g(x),\will(x),\port(x)$ are bounded integers. Take
$\psi(x)=C_0\rho(x)+C_1 g(x)+C_2\will(x)+\port(x)$ with $C_2=\Delta+1$ and
$C_1=C_2W+\Delta+1$ as in Lemma~\ref{lem:argmax}, and
$C_0=C_1\Delta+C_2W+\Delta+1$, again functions of $\Delta$ and $W$ alone, so one
weight set serves every centre. The verification that $\psi$ is strictly
order-preserving for the lexicographic order on $(\rho,g,\will,\port)$ and injective
(via $\port$) is the case analysis of Lemma~\ref{lem:argmax} with one extra leading
field: if $\rho(x)>\rho(y)$ then
$\psi(x)-\psi(y)\ge C_0-\big(C_1\Delta+C_2W+\Delta-1\big)=2>0$ by the choice of
$C_0$, and if $\rho(x)=\rho(y)$ the difference is exactly the three-field expression
already verified there. In particular $\psi$ is integer-valued and injective, so the
indicator $I(x)=\mathrm{ReLU}(\psi(x)-\psi_{\max}+1)$ of Lemma~\ref{lem:argmax} is
$\{0,1\}$-valued and equals $1$ for exactly one candidate \emph{unconditionally},
including in rounds where every gain is zero, a fact Lemma~\ref{lem:update} needs
to make such a round a no-op.

\emph{Zero-gain candidates are dominated.} The network's max-aggregation ranges over
\emph{all} candidates, including those with $g(x)=0$, which $\mathcal G_w$ excludes.
They are excluded automatically: such a candidate has $\rho=0$ and $g=0$, hence
$\psi(x)=C_2\will(x)+\port(x)\le C_2W+\deg(v)\le C_2W+\Delta<C_1$, whereas any
candidate with $g(x)\ge1$ has $\psi\ge C_1 g\ge C_1$. So whenever $\max_x g(x)\ge1$ the
$\psi$-maximiser has positive gain, and among those it realises $\mathcal G_w$'s key:
maximal $\rho$ (minimal ratio), then maximal $g$, then maximal $\will$, then maximal
$\port$, which equals smallest id because
$\port(x)=\deg(v)+1-\mathrm{rank}_{\uparrow\mathrm{id}}(x)$. The center
max-aggregation and broadcast-compare of Lemma~\ref{lem:argmax} then mark it in
$O(1)$ layers. Magnitude $\psi=O(|R|\Delta^2)=O(c_{\max}\Delta^3)=\Delta^{O(1)}$,
i.e.\ $O(\log\Delta)$ bits.
\end{proof}

\begin{theorem}[weighted constructive guarantee]
\label{thm:wconstr}
For every $\Delta$ and every fixed integer cost bound $c_{\max}=\Delta^{O(1)}$
there is a message-passing GNN with port numbering, depth $O(\Delta)$,
$O(c_{\max}\Delta)$ channels of $O(\log\Delta)$-bit precision, that at every centre
reproduces $\mathcal G_w$ and therefore attains a $(1+\ln\Delta_2)$-approximation to
the minimum-\emph{weight} MPR set, while all message passing stays within the
two-hop ball $B_2(v)$.
\end{theorem}
\begin{proof}
Replace, in each super-round of Theorem~\ref{thm:constr}, the integer argmax of
Lemma~\ref{lem:argmax} by the weighted argmin of Lemma~\ref{lem:wargmin}. The
counting layer (Lemma~\ref{lem:count}), the gated update (Lemma~\ref{lem:update})
with its positivity bit and idempotent write $s(x)\leftarrow\max(s(x),m(x))$, the
setup pass, and the termination and locality arguments are unchanged, with
$\mathcal G$ replaced by $\mathcal G_w$.

The induction needs one addition, because the two selection rules range over
different domains: $\mathcal G_w$ minimises over $\{x:g(x)\ge1\}$ while the
network's max-aggregation ranges over all of $N_1$. The invariant $P_r$ is
objective-independent and carries over verbatim. In the step, assume $P_r$. If
$\max_x g(x)=0$ then $\mathcal G_w$ has terminated; the unconditional clause of
Lemma~\ref{lem:wargmin} still gives a $\{0,1\}$-valued indicator, and by
Lemma~\ref{lem:update} the positivity bit is $0$ for every candidate, so
$m=I\wedge q\equiv0$ and the round is a no-op with the state unchanged. If $\max_x g(x)\ge1$, the zero-gain domination of
Lemma~\ref{lem:wargmin} puts the $\psi$-maximiser in $\{x:g(x)\ge1\}$ and makes it
$\mathcal G_w$'s selection there; Lemma~\ref{lem:update} then adds it and updates
$U$, restoring $P_{r+1}$. Since $\mathcal G_w$ terminates after $k^\ast\le\Delta$
selections, $P_\Delta$ gives $S=\mathcal G_w(v)$ with the remaining rounds no-ops.

By Chv\'atal's weighted set-cover analysis $\mathcal G_w$ is
$(1+\ln\Delta_2)$-approximate for $\opt_w$ \citep{chvatal1979greedy}, and the
network inherits the bound. Depth is $O(\Delta)$ and width is $O(c_{\max}\Delta)$
(the ratio-rank layer dominates; all other channels are $O(1)$); precision is
$O(\log\Delta)$; and, as in Theorem~\ref{thm:constr}, no message leaves $B_2(v)$.
\end{proof}

\noindent The weighted theorem costs one thing relative to the cardinality one:
width $O(1)\to O(c_{\max}\Delta)$, spent entirely on the fixed ratio-rank table, and
$O(\Delta)$ only because $c_{\max}$ is a constant of the metric encoding. Depth,
locality, and the $(1+\ln\Delta_2)$ form are unchanged, and precision remains
$O(\log\Delta)$, though the key magnitude grows from $\Theta(\Delta^2)$ to
$\Theta(c_{\max}\Delta^3)$. The bound is now against the objective the paper
actually optimises.

\noindent\emph{Consistency with the deployed model.} Theorems~\ref{thm:constr}
and~\ref{thm:wconstr} concern the fixed-rule greedies $\mathcal G$ and
$\mathcal G_w$. The selector actually trained in the paper replaces the adaptive
per-round key with a single static learned score (\S\,``A Feasible-by-Construction
Neural Selector''); it is therefore \emph{not} the network of either theorem, and
its only guaranteed property is feasibility. What the theorems establish is that a
bounded-depth GNN attains the greedy ceiling at the horizon, on the cardinality
objective (Theorem~\ref{thm:constr}) and on the weighted objective the experiments
target (Theorem~\ref{thm:wconstr}); the empirical sections then report that the
learned score's \emph{measured} weighted cost ratio ($1.030$) is lower than the
measured ratio of the deployed weighted greedy ($1.138$). On our instances every
candidate carries the default willingness, so that rule's \textsc{will\_always}
forcing and \textsc{will\_never} mask are vacuous and the measured baseline is
exactly $\mathcal G_w$. That is an empirical comparison
against the certified rule, not a certified bound for the learned selector: the
$(1+\ln\Delta_2)$ guarantee attaches to $\mathcal G_w$, and nothing in this
supplement transfers it to a learned score.

\section{Reproducibility details}
\label{sec:repro}

The main paper reports the protocol; this section records the environment, the
full hyperparameter set, and the paired significance test, so that a reader can
reproduce the numbers rather than infer them.

\paragraph{Computing infrastructure.} Training, evaluation and all synthetic-graph
experiments were run on a single workstation: AMD Ryzen 9 5900X (12 cores, 24
threads), 64\,GB RAM, Windows~11 (x86-64), \textbf{CPU only}; no GPU is used at
any point, and \texttt{torch.cuda.is\_available()} is \texttt{False} throughout.
Software: Python 3.14.3, PyTorch 2.11.0 (CPU build), PyTorch~Geometric 2.8.0,
OR-Tools 9.15.6755 (CP-SAT), NetworkX 3.6.1, NumPy 2.4.4, SciPy 1.17.1,
Matplotlib 3.10.8. Two artifacts were produced on a separate Linux workstation
because they need a C toolchain and a larger scratch disk: the OONF cross-check,
which compiles the reference RFC~7181 selection code, and the ANGLOVA
evaluation. Both are CPU-only and deterministic given the committed seeds, and
both ship with their logs and input manifests. The whole pipeline is
CPU-feasible: one training seed takes roughly four minutes at this size.

\paragraph{Hyperparameters.} The encoder is a 3-layer GATv2 with hidden width
$96$, $4$ attention heads averaged rather than concatenated, no dropout, edge
dimension $3$ and node dimension $9$, each layer followed by LayerNorm on a
residual branch; the read-out head is $\mathrm{Linear}(96,96)$, ReLU,
$\mathrm{Linear}(96,1)$, for $239{,}329$ parameters in total. Training uses Adam
at learning rate $10^{-3}$, weight decay $10^{-5}$, batch size $256$, and a
\texttt{BCEWithLogits} objective whose positive weight is computed from the
training-label balance rather than fixed. We select the reported checkpoint by
\emph{decoded validation cost ratio}, not by label accuracy, because multiple
relay sets can be optimal and label agreement is therefore a noisy proxy for the
quantity we care about; validation during training is subsampled to $800$
instances for speed. The headline and horizon arms train for $12$ epochs across
five seeds; the released checkpoint used for E3, E4 and the timings trains for
$20$. These are the values in the committed configuration files; we did not run
a hyperparameter search, so no search range is reported.

\paragraph{Randomness.} Every stochastic component takes an explicit integer
seed. Graph generation seeds the geometric sampler and the metric assignment per
graph (\texttt{scripts/gen\_data.py}), so the train, validation and test splits
are drawn from disjoint seed ranges and regenerate byte-identically. Training
seeds $0$ through $4$ set PyTorch's global generator and the data-loader
shuffling. The reported multi-seed figures vary only the training seed; the data
is held fixed.

\paragraph{Paired significance test.} Because both selectors are evaluated on the
same instances, the appropriate test is paired and does not assume normal
residuals. On the $3200$ test instances the learned selector is cheaper than the
weighted greedy on $2459$ of the $2978$ non-tied pairs and dearer on $519$, with
a mean per-instance saving of $3.32$ cost units. A one-sided Wilcoxon
signed-rank test rejects equality overwhelmingly, $W=4.85\times10^{5}$,
$p<10^{-290}$. We report this for completeness; the effect is large enough that
its direction was never in doubt, and the seed-to-seed spread reported in the
main paper ($\pm0.001$ on cost/optimum) is the more informative uncertainty.

\bibliography{references}